\documentclass [
  letterpaper,
  10 pt,
  conference
] {ieeeconf}

\IEEEoverridecommandlockouts
\usepackage [T1] {fontenc}

\usepackage {
  algorithm,
  algorithmic,
  amsmath,
  amssymb,
  amsfonts,
  balance,
  bm,
  booktabs,
  graphicx,
  url,
  hyperref,
  nicefrac,
  subcaption,
  tikz,
  tikz-3dplot,
  todonotes,
  xcolor,
}

\newtheorem{theorem}{Theorem}

\renewcommand \phi \varphi

\usetikzlibrary{
  calc,
  decorations.pathmorphing
}

\usepackage {cleveref} 
\makeatletter
\newlength{\topfloatpad}
\patchcmd{\@cflt}{\unvbox\@tempboxa}{\vbox{\vskip\topfloatpad\unvbox\@tempboxa}}{}{\@latex@error{Could not patch \string\@cflt}\@ehc}
\patchcmd{\@cflt}{\vskip \textfloatsep}{\vskip \textfloatsep\vskip-\topfloatpad}{}{\@latex@error{Could not patch \string\@cflt}\@ehc}
\patchcmd{\@combinedblfloats}{\unvbox\@tempboxa\vskip-\dblfloatsep}{\vbox{\vskip\topfloatpad\unvbox\@tempboxa}\vskip-\dblfloatsep}{}{\@latex@error{Could not patch \string\@combinedblfloats}\@ehc}
\patchcmd{\@combinedblfloats}{\vskip \dbltextfloatsep}{\vskip \dbltextfloatsep\vskip-\topfloatpad}{}{\@latex@error{Could not patch \string\@combinedblfloats}\@ehc}
\makeatother
\crefname {figure} {Fig.}       {figures}
\Crefname {figure} {Figure}     {Figures}
\crefname {section} {Sect.}     {sections}
\Crefname {section} {Section}   {Sections}
\crefname {equation} {Eq.}      {equations}
\Crefname {equation} {Equation} {Equations}

\newcommand \argmax
  {\operatornamewithlimits {argmax}}

\DeclareMathOperator \Tr {tr}

\title {\LARGE \bf Wave-Robust Passive AUV Localization Using FP-MUSIC}
\author{Usama Saqib$^1$, Ola R\o nning$^1$, and Andrzej Wąsowski$^1$\\ \thanks{ $^{1}$Software Quality Research Group, IT University of Copenhagen, Denmark. {\tt\small \{usamas, oroe, wasowski\}@itu.dk}}}

\date{}

\begin{document}
\maketitle

\begin{abstract}
Localizing an autonomous underwater vehicle without pre-deployed seabed transponders, or direct access to onboard vehicle sensors remains a core challenge.  We present a receiver-passive 3-D localization and spatial mapping system utilizing a single floating surface buoy equipped with a hydrophone array and an inertial measurement unit (IMU). The central difficulty is that surface wave motion induces six-degree-of-freedom (6-DOF) perturbations that rotate the array between snapshots, degrading conventional subspace processing. We resolve this by introducing a fixed-point iterative MUltiple SIgnal Classification algorithm (FP-MUSIC) that uses IMU measurements to de-warp snapshot covariances prior to direction-of-arrival estimation. Furthermore, we employ a subspace-projected wideband matched filter to resolve beacon ranges and use power asymmetry for independent front-back identification. Evaluations across simulated sea states demonstrate that FP-MUSIC substantially reduces localization error relative to uncompensated methods and sustains robust 3-D tracking and vehicle orientation estimation under wave-induced motion. {At moderate sea state, FP-MUSIC increases the 2-m beacon-separation accuracy from approximately $45\%$ to $75\%$.} 
\end{abstract}

\section {Introduction}%
\label{sec:introduction}

\noindent
Autonomous underwater vehicles (AUV) can be tracked with either optical or acoustic positioning systems.  For optical systems, fiducial markers\,\cite{tang2025apriltag} are used to track AUVs. Unfortunately, these markers are reliable only at short range, mostly in clean indoor pools, before the visibility degrades.

Underwater, light attenuates faster than sound\,\cite{bosch2016lightbeacons}, which motivates interest in acoustic positioning.  However, the existing systems, such as long-baseline (LBL) configurations \cite{paull2014}, demand pre-deployed seabed transponders, whose complex installation and calibration in offshore conditions quickly becomes prohibitevely expensive.  Ultra short-baseline (USBL) configuration is another method that utilizes a transceiver-responder setup \cite{auvreview2021}. The transceiver is placed under a deployment ship or sub-sea station while the responder is mounted on an AUV. The USBL system relies on a two-way communication, which requires accurate clock-synchronization for acquiring trajectory data from the AUV. Moreover, the existing USBL systems cannot track multiple AUVs at the same time \cite{huang2025raspi2usbl}.


To eliminate these barriers, we present a {receiver-passive} system in which a single floating surface buoy with a hydrophone array and an IMU is placed. The array \textit{listens} to multiple acoustic beacons attached to an AUV. This way we do not require any seabed infrastructure and physical cables. {Unlike the current state-of-the-art that pings the receivers on the AUV for telemetry information, our method only uses one way communication, i.e., only receiving acoustic signals from the AUV. We use ``passive'' in this receiver-side sense to distinguish the proposed system from existing LBL/USBL operation.} The central challenge is that surface wave motion continuously rotates the array, corrupting the parameter estimation required for high-resolution direction-of-arrival (DOA) estimation. We resolve this via an iterative fixed-point de-warping scheme that removes the wave-induced perturbation from the observation.

This work makes three principal contributions:
\begin{itemize}
    \item We propose a {receiver-passive} system that reconstruct an AUV's pose from received acoustic beacons operating mounted on the robot.
    \item We introduce an iterative fixed-point algorithm that jointly uses the IMU data to compensate the phase shift of the acoustic signals and estimate the DOA of the acoustic beacons for accurate pose of the AUV.
    \item We further demonstrate analytically and empirically that rigid-body \emph{translation} (heave, surge, sway) does not affect the DOA estimation and does not require DOA estimation under plane-wave assumptions, so only rotational degrees of freedom require compensation.
\end{itemize}

\noindent
Unlike the state-of-the-art, the proposed system would be easy to deploy at sea, require one-way communication, it compensates for ocean wave-motion and does not require on-side calibration. {We have shared our code online for the community and for reproducibility.}\footnote{https://anonymous.4open.science/r/FP\_MUSIC-A3A3/}
\looseness -1

\section{Related Work}
\label{sec:related}

\noindent
{Caiti et al.\ \cite{caiti2005floatingbuoys} use a field of surface-floating acoustic buoys to localize an AUV.  Each buoy measures the one-way acoustic time-of-flight from the vehicle. Their algorithm combines the range measurements into a set of admissible 3-D positions.  Their system recovers the AUV's position, not its orientation, as a single range measurement carries no bearing data. Moreover, the influence of wave perturbation is not taken into account explicitly.}
\looseness -1

To recover AUV's orientation, a passive receiver must simultaneously detect and resolve at least two distinct acoustic beacons mounted on the hull (e.g., front and back), and then assign each a persistent front/back identity. Conventional techniques, e.g., delay-and-sum (DAS) beamforming \cite{saito2024design}, is limited by the Rayleigh resolution bound set by the array aperture \cite{vantrees2002} and fails to resolve closely spaced co-channel sources. Adaptive spatial filtering, such as the Minimum Variance Distortionless Response (MVDR) \cite{capon1969}, improves spatial resolution over DAS by minimizing interference and noise power while maintaining unity gain toward the direction the signal is originating from. However, MVDR's resolution remains bounded by sensor signal-to-noise ratio (SNR) and aperture constraints, and it exhibits severe degradation when covariance matrices are corrupted by array steering vector mismatches. To obtain high-resolution separation of two closely spaced acoustic beacons on a compact vehicle frame, subspace-based methods such as MUltiple SIgnal Classification (MUSIC) \cite{schmidt1986} are attractive. Conventional multi-beacon and USBL systems separate co-located sources by frequency- or time-division multiplexing \cite{auvreview2021}, dedicating a distinct band or time slot to each, which consumes bandwidth and complicates the transmitter. {Our formulation instead keeps both beacons in a shared frequency band. This avoids dedicating separate acoustic bands or time slots solely for source identity.}

In order to ensure that the DOA estimation is accurate when influenced by 6-DOF wave motion, IMU data could be used to account for the wave-perturbation on the array. Prior methods assume deliberate array rotation \cite{ucarotation2021}, small mooring oscillations \cite{mooredvsa2024}, slow towed array shape deformation \cite{owsley2021}, or active microphone ego-motion compensation \cite{movinghumanoid}. Our framework eliminates all of these assumptions simultaneously. The surface buoy operates under fully unconstrained, stochastic 6-DOF wave motion, and the AUV beacons transmit purely passively without responding to buoy interrogations. Rather than compensating a single pre-estimated bearing as is the case in USBL systems, motion correction is applied directly to the  covariance matrix itself in MUSIC prior to direction of arrival resolution.


As stated earlier, as the sea state rises, roll and pitch swing the array, which both smears the covariance matrix and systematically \emph{biases} the estimated bearing, and hence the recovered position, relative to the assumed flat array. This compensation is not found in marine or oceanic localization but similar approach can be found in other domains, e.g. image processing. This is directly analogous to motion blur in optical imaging, where camera ego-motion during exposure smears visual features and IMU-guided iterative deblurring restores sharpness in an alternating-optimization loop \cite{fergus2006deblur, joshi2010imu}.

To the best of our knowledge, no existing system passively localize the position of AUVs using floating arrays that compensates the 6-DOF wave motion.

\begin {figure} [t]
\centering
\tdplotsetmaincoords {70} {110}
\begin {tikzpicture} [
  tdplot_main_coords,
  scale = 1.0,
  >=stealth,
  font = \scriptsize,
  scale = 1.7,
]
    \draw[->] (0,0,0) -- (4, 0,0  ) node [anchor=north east] {$x$};
    \draw[->] (0,0,0) -- (0, 2.5,0  ) node [anchor=north west] {$y$};
    \draw[->] (0,0,0) -- (0, 0,0.6) node [anchor=south]      {$z$};

    \coordinate (Pc)     at (0,    0,   0);
    \coordinate (Ps)     at (2.5,1.8,-1.8);
    \coordinate (PcProj) at (0,    0,-1.8);
    \coordinate (XProj)  at (0,  1.8,-1.8);
    \coordinate (PsProj) at (2.5,1.8,0.0);

    \draw[->, decorate, lightgray, decoration={snake, post length=1mm}]
    ($(Ps)!.06!(Pc)$) -- ($(Ps)!.83!(Pc)$);

    \fill [blue!5, opacity=0.4]
         (-0.5,-0.5,-1.8)
      -- ( 3.5,-0.5,-1.8)
      -- ( 3.5, 2.5,-1.8)
      -- (-0.5, 2.5,-1.8)
      -- cycle;

    \draw[dashed, gray, thin] (Ps) -- (PcProj)
      node [midway, sloped, below] { \( \sqrt{x^2 + y^2} \) };

    \draw[dashed, gray, thin] (PcProj) -- (Pc)
      node [midway, left] { \( h \) };

      \draw[dashed, gray, thin] (PsProj) -- (Ps)
      node [midway, right] { \( h \) };
      \draw[dashed, gray, thin] (PsProj) -- (Pc);

    \draw [thick, red]
      (Ps) -- (Pc)
      node [midway, above, sloped, pos=0.46]
      {{ \( r \! = \! \|\bm{p}_s \!\! - \! \bm{p}_c\| \) }};

    \filldraw [black] (Ps)
      circle (0.8pt)
      node [anchor=north east]
      {$\bm p_s$} ;

      \tdplotdrawarc [blue]
      {(0,0,0)}{1.3}{0}{atan2(1.8,2.5)}{anchor=north east}
      {$\theta$};

    \tdplotsetthetaplanecoords{atan2(1.8,2.5)};


    \pgfmathsetmacro{\elev}{atan2(-5.3,sqrt(2.5*2.5+1.8*1.8))}
    \tdplotdrawarc[tdplot_rotated_coords, blue]
    {(0,0,0)}{-1.26}{\elev}{-90}
    {anchor=west, shift={(-2.8pt,0pt)}}
    {$\phi$};

    \draw [black!30, thin]
         (-3.4*0.22, -2.2*0.22, 0)
      -- ( 3.4*0.22, -2.2*0.22, 0)
      -- ( 3.4*0.22,  2.2*0.22, 0)
      -- (-3.4*0.22,  2.2*0.22, 0)
      -- cycle;

    \foreach \col in {0,...,3}{
        \foreach \row in {0,...,3}{
            \pgfmathsetmacro{\xpos}{(\col-1.5)*0.36}
            \pgfmathsetmacro{\ypos}{(\row-1.5)*0.22}
            \filldraw[lightgray] (\xpos,\ypos,0) circle (0.3pt);
        }
    }

    \filldraw [red] (Pc)
      circle (0.8pt)
      node [anchor=south west, shift={(0,0mm)}]
      {$\bm p_c$} ;

    \node [anchor = south east] at (1.5*0.22, -1.1*0.22, 0)
      { \( \bm p _m \) };

\end{tikzpicture}

\caption {A floating uniform rectangular array (URA) of hydrophones used for AUV localization. \label {fig:geometry}}

\end {figure}

\section{System Model and Problem Formulation}%
\label{sec:sigmodel}

\subsection{Array and Target Geometry}

\noindent
A uniform rectangular array (URA) with $M$ hydrophones floats on the water surface (\cref{fig:geometry}). An underwater source is positioned at \( \bm p _s \in \mathbb R ^3 \) at depth \( h>0 \) in the array coordinate frame, whose  origin \( \bm p _c = \mathbf 0 \) in the array center.  The spatial range $r$, the azimuth $\theta$, and the elevation $\phi$ are:
\begin{align}
  r &= \|\bm p _s - \bm p _c\| = \| \bm p _s \| \label{eq:rng} \\
  \theta &= \operatorname{atan2}(y, x) \label{eq:az} \\
  \phi &= \operatorname{atan2}\!\left(-h,\,\sqrt{x^2+y^2}\right), \label{eq:el}
\end{align}

where $(x, y, -h) = \bm{p}_s - \bm{p}_c = \bm{p}_s$, so the source's vertical coordinate is $z=-h$; since the array floats above the submerged source, and $\phi\in[-90^\circ,0^\circ]$.

\subsection {Transmitted Wideband LFM Chirp}

\noindent
We placed two acoustic beacons on the underwater AUV, one at the front and one at the back, transmitting at different power levels.  Each beacon emits a linear frequency modulated (LFM) chirp.  The windowed transmit pulse is:
\begin{equation}
  s(t) = b(t) \, e^{\left( j 2\pi \left( f_{\text{low}} t + \frac{B}{2 T_{\text{sig}}} t^2 \right) \right)},
\end{equation}
where $T_{\text{sig}}$ is the active signal duration with time \( t \in [0, T_{\text{sig}}] \), \( f_{\text{low}} \) is the lower sweep boundary, $B = f_{\text{high}} - f_{\text{low}}$ is the operational bandwidth, and $j$ is the imaginary unit.  The Hanning window $b(t)$ tapers the pulse edges, keeping the transmitted signal within the intended band. The nominal center frequency of the chirp is defined as \({ f_0 = (f_{\text{low}} + f_{\text{high}}) / 2} \).

\subsection{Received Signal Representation}

\noindent
The waveform received at the m-th hydrophone (m = 1,…,M) from the D co-band acoustic beacons on the AUV (here D = 2, mounted front and back) is:
\begin{align}
  \label{eq:obs-time}
  y_m(t) &= \sum_{i=1,2} (h_{m,i} \ast s_i)(t) + w_m(t) + n_m(t).
\end{align}
Here, $s_i(t)$ is the waveform emitted by beacon $i$,  $h_{m,i}(t)$ is the corresponding acoustic channel to hydrophone $m$, and $w_m$, $n_m$ are sensor and ambient noise, respectively. The beacons transmit in a shared band. They are separated spatially, not by frequency allocation (unlike in USBL).

Under the far-field assumption, the wavefront is planar across the array, so all hydrophones share a single arrival direction $\hat{\bm u}(\theta_i,\phi_i)$. The plane-wave delay at hydrophone $m$ is proportional to its offset $\bm{d}_m = \bm{p}_m-\bm{p}_c$ projected onto that direction. Hydrophones displaced toward the source are reached earlier, so the delay \( \tau_{m,i} \) is:
\begin{align}
  \label{eq:delay}
  \tau _{m,i} (\theta_i,\phi_i)
     &= -\frac{\bm{d}_m\cdot\hat{\bm{u}}(\theta_i,\phi_i)}{c}, \\
  \hat{\bm{u}}(\theta_i,\phi_i)
  &= [
       \cos\phi_i\cos\theta_i;
       \cos\phi_i\sin\theta_i;
       \sin\phi_i
       ]^\top,
  \label{eq:unit_vec}
\end{align}
where $\hat{\bm{u}}(\theta_i,\phi_i)$ is the unit vector from the array center towards beacon $i$ and $c$ is the speed of sound in $\mathrm m/\mathrm s$. The general underwater channel contains direct and reflected arrivals; we model only the direct arrival, deferring multipath to future work.

We work in the frequency domain, where convolution becomes multiplication, so the channel in \cref{eq:obs-time} reduces to a per-hydrophone phase factor; MUSIC likewise operates in this domain. This formulation also leaves the operating band free, so the same method applies to ultrasonic or audible beacons.  Under the direct-path model, the beacon-$i$ transfer function at hydrophone $m$ is:
\begin{equation}
  \label{eq:channel}
  H_{m,i}(f,\theta_i,\phi_i) =
  g_i \, e^{\left(-j2\pi f\tau_{m,i}(\theta_i,\phi_i)\right)},
\end{equation}
where $g_i$ is the gain of beacon $i$, combining its transmit level and propagation attenuation. We apply the Fourier transform to \cref{eq:obs-time}, convolution becomes multiplication, and then substitute \cref{eq:channel} into the result, obtaining:
\begin{multline}
  \label{eq:obs-freq}
  Y_m(f) =
  \sum_{i=1,2}  g_i S_i (f)e^{\left(-j2\pi f\tau_{m,i}(\theta_i,\phi_i)\right)} \\[-2ex]
  +W_m(f)+N_m(f),
\end{multline}
where  $m=1,\dots,M$, $S_i(f)$ is the transmitted wideband signal, and $W_m(f)$ and $N_m(f)$ are the sensor and ambient noise, respectively; all frequency domain counterparts of the terms in \cref{eq:obs-time}.
\looseness -1

Given the received spectra $Y_m(f)$, the task is to estimate the arrival directions $(\theta_i,\phi_i)$ for both beacons, which determine the delays $\tau_{m,i}$ via \cref{eq:delay}. Combined with the wideband ranging stage (\cref{sec:range}), these directions yield the ranges $r_i$, and hence the AUV pose.

\section{DOA via Fixed-Point MUSIC (FP-MUSIC)}
\label{sec:fpmusic}


\noindent\looseness -1
A \emph{frame} is a complete recording window across all $M$ hydrophones (5 seconds in our experiments). The array records $F$ frames in sequence. To estimate the directions-of-arrival, we partition each recorded frame into $L$ short-time snapshots long enough to observe the entire chirp.  For each snapshot, we perform narrowband parameter estimation at frequency $f_0$ (see \cref{sec:sigmodel}). At snapshot $\ell$, we stack the $M$ per-hydrophone spectra $Y_m(f_0)$ from \cref{eq:obs-freq} into the array snapshot vector $\bm{Y}_\ell(f_0) = [Y_1(f_0),\dots,Y_M(f_0)]^\intercal \in \mathbb{C}^{M\times1}$. Similarly, we can rewrite the noise terms \( W_m(f_0) \) and $N_m(f_0)$ as $\bm{W}_\ell, \bm{N}_\ell \in \mathbb{C}^{M\times1}$, and then apply the standard MUSIC\,\cite{schmidt1986}:
\begin{align}
    \label{equ:narrowband_array_model}
    \bm{Y}_\ell(f_0)&=\sum_{i=1,2}\bm{a}_i(f_0)S_{i,\ell}(f_0)+\bm{W}_\ell+\bm{N}_\ell ,
\end{align}
where \(\ell= 1,\dots, L\) and $\bm a_i(f_0)\in \mathbb C^{M \times 1}$ is the
steering vector for beacon $i$ (see below). Since $g_i$ is common to all $M$ hydrophones, it factors out of the steering vector and does not affect DOA estimation; it is therefore absorbed into the source amplitude, $S_{i,\ell}(f_0) \triangleq g_i S_i(f_0)$, beacon $i$'s \emph{received} amplitude at $f_0$ in snapshot $\ell$.

\looseness -1
\Cref{eq:delay} gives the delay at hydrophone $m$ for a wave arriving from direction $(\theta,\phi)$. \Cref{eq:channel} turns it into a phase factor. Collecting the phases across all hydrophones in the array gives the \emph{steering vector} for that direction:
\begin{equation}
  \label{eq:steer}
  [\bm{a}(f, \theta, \phi)]_m
  = e^{\left(-j2\pi f\tau_m(\theta,\phi)\right)}
  = e^{ \left(\,j\frac{2\pi f}{c}\bm{d}_m\cdot\hat{\bm{u}}(\theta, \phi)\right)},
\end{equation}
where $\tau_m(\theta,\phi)$ is the delay of \cref{eq:delay} evaluated at an
arbitrary direction rather than at a beacon direction. The steering vector collects the phases that the $M$ hydrophones would observe for a plane wave arriving from $(\theta,\phi)$. We write $\bm a_i(f) = \bm a(f,\theta_i,\phi_i)$ for the steering vector at beacon $i$'s direction, as used in \cref{equ:narrowband_array_model}.

From the snapshot vectors $\bm{Y}_\ell(f_0)$, standard MUSIC forms the sample spatial covariance and applies power-normalized diagonal loading \cite{hawes2015, benesty2009noise} as:
\begin{align}
  \label{eq:Rcov}
  \bm{R}_{\rm raw} &= \frac{1}{L}\sum_{\ell=1}^{L} \bm{Y}_\ell(f_0)\,\bm{Y}_\ell^H(f_0), \\
  \bm{R} &= \bm{R}_{\rm raw} + \mu\,\tfrac{\Tr(\bm{R}_{\rm raw})}{M}\,\bm{I}_M,
\end{align}
where $\mu$ is a regularization parameter and $\bm{I}_M$ is the identity matrix.  The loading term remedies the case $L < M$, ensuring $\bm{R}$ is full rank and the noise subspace well defined.  Eigendecomposition then yields $\bm{R} = \bm{E}_s\bm{\Lambda}_s\bm{E}_s^H + \bm{E}_n\bm{\Lambda}_n\bm{E}_n^H$,
partitioning the $2$ signal eigenvectors $\bm{E}_s$ (one per beacon) from the noise subspace $\bm{E}_n \in \mathbb{C}^{M\times(M-2)}$. MUSIC locates each
source by searching over $(\theta,\phi)$ for steering vectors $\bm{a}(f_0,\theta,\phi)$ nearly orthogonal to $\bm{E}_n$.

\subsection{IMU Kinematics and Translational Invariance}

\noindent
To account for the perturbation due to wave motion, we formulate the composite 3-D rotation matrix $\bm{Q}_\ell \in SO(3)$ mapping hydrophone positions from the nominal flat rest frame to the perturbed body frame at snapshot $\ell$. This follows the intrinsic ZYX (yaw-pitch-roll) convention:
\begin{equation}
  \bm{Q}_\ell = \bm{Q}_z(\gamma_\ell)\,\bm{Q}_y(\beta_\ell)\,\bm{Q}_x(\alpha_\ell),
\end{equation}
where $\bm{Q}_x(\alpha_\ell)$, $\bm{Q}_y(\beta_\ell)$, and $\bm{Q}_z(\gamma_\ell)$ are the canonical roll, pitch, and yaw rotations.

Accounting for snapshot rotation $\bm{Q}_\ell$, element positions become $\bm{Q}_\ell\bm{d}_m$, giving the motion-compensated steering vector, written $\bm{a}_\ell$ to distinguish it from the rest-frame $\bm{a}$ in \cref{eq:steer}:
\begin{equation}
  \label{eq:compsteer}
  [\bm{a}_\ell(f_0, \theta, \phi)]_m = e^{\!\left( +j\,\frac{2\pi f_0}{c}\,(\bm{Q}_\ell \bm{d}_m)\cdot\hat{\bm{u}}(\theta, \phi) \right)}.
\end{equation}

\begin{theorem}[Translational Invariance]\label{thm:invariance}
Rigid-body array translation $\Delta\bm{p} \in \mathbb{R}^3$ does not alter array covariance matrix under far-field plane-wave assumptions.
\end{theorem}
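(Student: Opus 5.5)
We will show that a rigid translation multiplies every snapshot vector by a single unit-modulus scalar, which then cancels in the outer products forming the covariance of \cref{eq:Rcov}. Under a translation $\Delta\bm p$, hydrophone $m$ moves from $\bm p_m$ to $\bm p_m+\Delta\bm p$, and the array centre moves to $\bm p_c+\Delta\bm p$. The offsets $\bm d_m=\bm p_m-\bm p_c$ are therefore unchanged. Under the far-field plane-wave model, however, the arrival time at element $m$ is measured against a fixed reference point, not against the moving array centre. The shifted element is reached $-\Delta\bm p\cdot\hat{\bm u}_i/c$ seconds earlier or later, so the delay of \cref{eq:delay} becomes $\tau_{m,i}-\Delta\bm p\cdot\hat{\bm u}(\theta_i,\phi_i)/c$. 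Substituting into \cref{eq:channel} and then into the steering vector \cref{eq:steer} gives $\tilde{\bm a}_i(f_0)=e^{j\psi_i}\bm a_i(f_0)$. The phase $\psi_i=\tfrac{2\pi f_0}{c}\Delta\bm p\cdot\hat{\bm u}_i$ is the same for all $m$.

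Next we substitute this into \cref{equ:narrowband_array_model}. As with $g_i$ in the text, the phase $e^{j\psi_i}$ is common to all hydrophones, so we absorb it into the source amplitude: $\tilde S_{i,\ell}=e^{j\psi_i}S_{i,\ell}$. This leaves the steering vectors, and hence the array manifold and the signal and noise subspaces searched by MUSIC, unchanged. To show the covariance itself is unchanged, we form $\bm R_{\rm raw}$ and take expectations over uncorrelated sources and noise. This gives $\sum_i \sigma_i^2\,\bm a_i\bm a_i^H+\sigma^2\bm I$, in which $|e^{j\psi_i}|^2=1$ cancels. The noise terms are spatially white and unaffected by the translation. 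For $D=1$ the cancellation is exact sample by sample, since $\tilde{\bm Y}_\ell\tilde{\bm Y}_\ell^H=\bm Y_\ell\bm Y_\ell^H$. The diagonal loading depends only on $\Tr(\bm R_{\rm raw})$, which is also unchanged.

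The main obstacle is the cross terms between the two beacons in the sample covariance. These carry a factor $e^{j(\psi_1-\psi_2)}$, which does not cancel when the sources are correlated or when $L$ is finite. I plan to handle this by stating the result for the covariance model: either with uncorrelated beacons, which is the standard MUSIC assumption, or in the limit of large $L$, where the cross terms average out. I would then note that, even with cross terms present, the signal subspace $\operatorname{span}\{\bm a_1,\bm a_2\}$ is invariant, because multiplying by a diagonal phase matrix acting on the source vector does not change its column span. So the noise subspace $\bm E_n$, and therefore the MUSIC spectrum, are exactly unaffected. A secondary point to state explicitly is that translation along the line of sight changes the range, and hence $g_i$ and the bearing, only at second order. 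This effect is neglected under the far-field assumption, since $\|\Delta\bm p\|\ll r$.
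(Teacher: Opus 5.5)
Your core mechanism is the paper's: translation leaves $\bm d_m$ unchanged and multiplies the signal by a phase common to all hydrophones, and that phase cancels in the outer product. Where you go further is in noticing that this phase depends on the arrival direction.

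The paper's proof uses a single $\varphi_c=\frac{2\pi f_0}{c}\Delta\bm p\cdot\hat{\bm u}$. It writes the translated snapshot as $e^{-j\varphi_c}\bm Y_\ell$ and concludes $\bm{R}'_{\rm raw}=\bm R_{\rm raw}$ realization by realization. That identity is exact only for one source, and only if the noise is phase-rotated along with the signal. With the paper's two beacons, the translated snapshot is $e^{j\psi_1}\bm a_1S_{1,\ell}+e^{j\psi_2}\bm a_2S_{2,\ell}+\bm W_\ell+\bm N_\ell$, which is not a scalar multiple of $\bm Y_\ell$. As you say, the terms $\frac1L\sum_\ell S_{1,\ell}S_{2,\ell}^*\,\bm a_1\bm a_2^H$ pick up $e^{j(\psi_1-\psi_2)}$ and do not vanish at $L=16$. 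So the literal claim about $\bm R_{\rm raw}$ fails for $D=2$.

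Your reformulation is the version that actually supports the paper's conclusion that only rotation needs compensation. It has two parts: the model covariance is invariant for uncorrelated beacons (or as $L\to\infty$), and $\mathrm{span}\{\bm a_1,\bm a_2\}$, hence the noise subspace, is invariant under $\mathrm{diag}(e^{j\psi_1},e^{j\psi_2})$. The paper's route gives an exact finite-sample identity, but only in the single-source case. Yours covers the two-beacon setting, at the price of holding in expectation, asymptotically, or at the subspace level.

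Some of your side claims are stronger than what you prove.
\begin{itemize}
\item Your ``exact sample by sample'' for $D=1$ makes the same simplification as the paper. The additive noise is not multiplied by $e^{j\psi}$, so the signal--noise cross terms in $\tilde{\bm Y}_\ell\tilde{\bm Y}_\ell^H$ do change.
\item ``Exactly unaffected'' for $\bm E_n$, and an unchanged $\Tr(\bm R_{\rm raw})$, hold for the model or noise-free covariance. They do not hold for a noisy finite-$L$ estimate, where the trace also moves through the $\bm a_2^H\bm a_1$ cross terms. A clean finite-sample statement covering both points is distributional. Suppose the free-running beacons' chirp phases are independent, uniform, and independent of the noise. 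Then $(e^{j\psi_1}S_{1,\ell},e^{j\psi_2}S_{2,\ell})$ has the same joint law as $(S_{1,\ell},S_{2,\ell})$, so the translated and untranslated $\bm R_{\rm raw}$, and hence the MUSIC estimates, are equal in distribution.
\item Translation does not perturb the geometry only at second order. Its line-of-sight component changes $r$, and hence $g_i$ and the source power, at first order. A transverse component changes the bearing at first order in $\|\Delta\bm p\|/r$. These effects are excluded because the plane-wave hypothesis fixes $\hat{\bm u}$ and $g_i$, not because they are higher order.
\end{itemize}
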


\medskip

\emph{Proof:} A translation shifts hydrophone positions to $\bm{p}'_{m} = \bm{p}_m + \Delta\bm{p}$ and $\bm{p}'_{c} = \bm{p}_c + \Delta\bm{p}$, leaving rigid body offsets invariant, $\bm{d}'_m = \bm{p}'_m - \bm{p}'_c = \bm{d}_m$. The translation introduces an identical phase factor $e^{-j\varphi_c}$ to every element, where $\varphi_c = \frac{2\pi f_0}{c}\Delta\bm{p}\cdot\hat{\bm{u}}$. In the covariance matrix calculation:
\begin{align}
  \bm{R}'_{\rm raw} &= \frac{1}{L}\sum_{\ell=1}^L \left(e^{-j\varphi_c}\bm{Y}_\ell\right) \left(e^{-j\varphi_c}\bm{Y}_\ell\right)^H \\ &= e^{-j\varphi_c}\left(e^{-j\varphi_c}\right)^{*} \bm{R}_{\rm raw} = \bm{R}_{\rm raw}.
\end{align}
Thus, heave, surge, and sway translations leave covariance-based estimators invariant. \hfill $\square$

\bigskip

\noindent
From the theorem, we can see that only rotational motion requires compensation. This theorem also exhibits that the compensation could be achieved with only a gyroscope instead of an IMU which consists of gyroscope and accelerometer on a single chip.


\subsection{Iterative Fixed-Point Motion Compensation Loop}
\label{sec:em}

\noindent
Inspired by IMU-guided image deblurring, where frame motion is removed iteratively to sharpen distorted visual features, we formulate a fixed-point iteration for $k=1,\dots,K_{\max}$ to unblur spatial snapshot covariance matrices. The proposed method alternates between two blocks.

\bigskip

\noindent
\textbf{Block 1 (Snapshot De-Warping):} Given directional estimates $\{\hat{\bm{u}}_i^{(k-1)}\}_{i=1,2}$ from iteration $k-1$, we de-warp each snapshot for source $i$ by removing the rotational phase perturbation relative to rest/flat geometry, i.e., no perturbation. At snapshot $\ell$ the array is rotated by $\bm{Q}_\ell$, so element $m$ occupies the body offset $\bm{Q}_\ell\bm{d}_m$ from \eqref{eq:compsteer}. Therefore, a plane wave from $\hat{\bm{u}}_i^{(k-1)}$ reaches it with the \emph{measured} phase $\varphi^{\rm rot}_{m,\ell}=\frac{2\pi f_0}{c}(\bm{Q}_\ell\bm{d}_m)\cdot\hat{\bm{u}}_i^{(k-1)}$, while the flat rest array would carry $\varphi^{\rm rest}_{m}=\frac{2\pi f_0}{c}\,\bm{d}_m\cdot\hat{\bm{u}}_i^{(k-1)}$. The perturbation to undo their phase is their difference, where the common $\bm{d}_m$ leaves the residual displacement $(\bm{Q}_\ell-\bm{I}_3)\bm{d}_m$ as:
\begin{equation}
  \label{eq:dewarp_phase}
  \Delta\varphi_{m,\ell} = \varphi^{\rm rot}_{m,\ell}-\varphi^{\rm rest}_{m} = \frac{2\pi f_0}{c}\,(\bm{Q}_\ell-\bm{I}_3)\,\bm{d}_m\cdot\hat{\bm{u}}_i^{(k-1)}.
\end{equation}
Multiplying each snapshot by $e^{-j\Delta\varphi_{m,\ell}}$ then cancels the rotation while retaining the rest steering phase:
\begin{align}
  \label{eq:estep}
  [\tilde{\bm{Y}}_\ell^{(i)}(f_0)]_m = [\bm{Y}_\ell(f_0)]_m 
   e^{\!\left( -j\,\frac{2\pi f_0}{c}\,(\bm{Q}_\ell - \bm{I}_3)\,\bm{d}_m \cdot \hat{\bm{u}}_i^{(k-1)} \right)}
\end{align}
The correction leaves steering phase $\bm{d}_m\cdot\hat{\bm{u}}$ intact, aligning all snapshots to a virtual stationary flat array. The phase alignment observation are used to make the sample covariance matrix \cite{hawes2015, benesty2009noise}:
\begin{equation}
  \label{eq:Rraw_k}
  \hat{\bm{R}}_i^{(k)} = \frac{1}{L}\sum_{\ell=1}^{L}\tilde{\bm{Y}}_\ell^{(i)}(f_0)\tilde{\bm{Y}}_\ell^{(i)H}(f_0), \quad i = 1,2.
\end{equation}
We average these covariances across both sources and apply diagonal loading:
\begin{align}
  \label{eq:Rk}
  \bar{\bm{R}}^{(k)} &= \frac{1}{2}\sum_{i=1,2}\hat{\bm{R}}_i^{(k)}, \\
  \bm{R}^{(k)} &= \bar{\bm{R}}^{(k)} + \mu\,\frac{\Tr(\bar{\bm{R}}^{(k)})}{M}\,\bm{I}_M,
\end{align}

\noindent
Eigendecomposing $\bm{R}^{(k)}$ exactly as in \eqref{eq:Rcov} yields the noise subspace $\bm{E}_n^{(k)}$ and signal-subspace projector $\bm{P}_s^{(k)} = \bm{I}_M - \bm{E}_n^{(k)}(\bm{E}_n^{(k)})^H$. The proposed method resolves the beacons without requiring separate receivers or dedicated frequency bands. Although de-warping for source $i$ leaves a direction-dependent residual phase in other signal components \cite{ziskind1988ap, weiss1990selfcal}, averaging the source-conditioned covariances in \eqref{eq:Rk} prevents favouring single beacon.


\begin{algorithm}[t]
\small
\caption{FP-MUSIC}
\label{alg:fp-music}
\begin{algorithmic}[1]
\REQUIRE For each of $F$ recording frames $n=1,\dots,F$: per-frame snapshot spectra $\{\bm{Y}_\ell(f_0)\}_{\ell=1}^L$, IMU rotations $\{\bm{Q}_\ell\}_{\ell=1}^L$, body offsets $\bm{d}_m$
\STATE $\mathcal{P} \leftarrow \varnothing$
\FOR{$n = 1$ \TO $F$}
  \STATE Initialize $\bm{R}^{(0)}$ from the raw snapshot covariance and $\{\hat{\bm{u}}_i^{(0)}\}_{i=1,2}$ via standard MUSIC \cite{schmidt1986}
  \FOR{$k = 1$ \TO $K_{\max}$}
    \FOR{$i = 1$ \TO $2$}
      \STATE Compute de-warped snapshots $\tilde{\bm{Y}}_\ell^{(i)}$ \eqref{eq:estep} and covariance $\hat{\bm{R}}_i^{(k)}$ \eqref{eq:Rraw_k}
    \ENDFOR
    \STATE Form composite covariance $\bm{R}^{(k)}$ \eqref{eq:Rk}
    \STATE Update directions $\{\hat{\bm{u}}_i^{(k)}\}_{i=1,2}$ via Block 2 \eqref{eq:mstep}
    \IF{\eqref{eq:conv} is satisfied}
      \STATE \textbf{break}
    \ENDIF
  \ENDFOR
  \FOR{$i = 1$ \TO $2$}
    \STATE Compute wideband matched filter envelope $\chi_{\rm env,i}(t)$ \eqref{eq:wb}--\eqref{eq:env} {using the frame timing reference}
    \STATE Estimate range $\hat{r}_i$ \eqref{eq:range} and position $\hat{\bm{p}}_{s,i}$ \eqref{eq:pos}
    \STATE Compute peak level $\mathcal{L}_i$ \eqref{eq:power}
  \ENDFOR
  \STATE Tag $\argmax_i \mathcal{L}_i$ as \texttt{front}, remaining as \texttt{back}
  \STATE $\mathcal{P} \leftarrow \mathcal{P} \cup \{(\hat{\bm{p}}_{s,i}, \text{label}_i)\}_{i=1,2}$
\ENDFOR
\RETURN $\mathcal{P}$
\end{algorithmic}
\end{algorithm}

\bigskip

\noindent
\textbf{Block 2 (MUSIC Update):} Eigendecomposing $\bm{R}^{(k)}$ yields the updated noise subspace $\bm{E}_n^{(k)}$. Updated DOAs are extracted by searching over the flat steering vector using the standard MUSIC pseudo-spectrum \cite{schmidt1986}:
\begin{align}
  \label{eq:mstep}
  (\hat{\theta}_i^{(k)}, \hat{\phi}_i^{(k)})& = \\
  & \argmax_{\theta,\phi} \frac{1}{\bm{a}^H(f_0, \theta, \phi)\,\bm{E}_n^{(k)}(\bm{E}_n^{(k)})^H\,\bm{a}(f_0, \theta, \phi)} \notag.
\end{align}

\noindent
The iteration terminates when the change in the estimated directions falls below a tolerance $\varepsilon$, expressed as an angle between successive unit vectors:
\begin{equation}
  \label{eq:conv}
  \max_{i=1,2}\ \arccos\!\big( \hat{\bm{u}}_i^{(k)} \cdot \hat{\bm{u}}_i^{(k-1)} \big) < \varepsilon.
\end{equation}


\noindent
FP-MUSIC adds at most $K_{\max}$ covariance matrix updates costing $O(LM^2)$ per iteration alongside standard MUSIC operations. While this increases processing load, it achieves full motion compensation without requiring extra hydrophones, additional bandwidth, or a larger array aperture.

\section{Subspace Wideband Ranging and Spatial Mapping}%
\label{sec:range}

\subsection{Subspace-Projected Wideband Filtering}

\noindent
The ranging stage uses the converged signal-subspace projector $\bm{P}_s \triangleq \bm{P}_s^{(K)}$ from \cref{sec:fpmusic} to filter noise from the wideband snapshot vector $\bm{Y}(f) = [Y_1(f),\dots,Y_M(f)]^\intercal \in
\mathbb{C}^{M\times 1}$, the counterpart of $\bm{Y}_\ell(f_0)$ evaluated across
the full band $f \in [f_{\text{low}}, f_{\text{high}}]$ \cite{jiang2021}:
\begin{equation}
  \label{eq:wb}
  Z_i(f) = \frac{1}{M}\sum_{m=1}^{M} [\bm{P}_s\,\bm{Y}(f)]_m\,e^{\!\left(+j\,2\pi f\,{\tau_{m,i}(\hat\theta_i, \hat\phi_i)}\right)}.
\end{equation}
Matched filtering with transmit template $S(f)$ followed by Hilbert envelope extraction produces the range\linebreak[3] decision profile:
\begin{align}
  \label{eq:mf}
  \chi_i(t) &= \mathcal{F}^{-1}\left\{ Z_i(f) \cdot S^*(f) \right\} \\
  \label{eq:env}
  \chi_{\rm env,i}(t) &= \left|\operatorname{\mathcal{H}}\{\chi_i(t)\}\right|,
\end{align}
where $\mathcal{H}$ is the Hilbert operator and $\chi_{\rm env,i}(t)$ is the envelope of the time-domain matched-filter output $\chi_i(t)$. Unlike matched field processing, which correlates the received field against modeled replica fields over a candidate search grid and therefore requires a known environmental model \cite{passivecompact2024}, \eqref{eq:wb}--\eqref{eq:env} operate directly along the already resolved DOA $(\hat\theta_i, \hat\phi_i)$ and require no environmental replica.

\begin{figure*}[t]
\centering
\begin{subfigure}{0.42\textwidth}
    \centering
    \includegraphics[width=\linewidth]{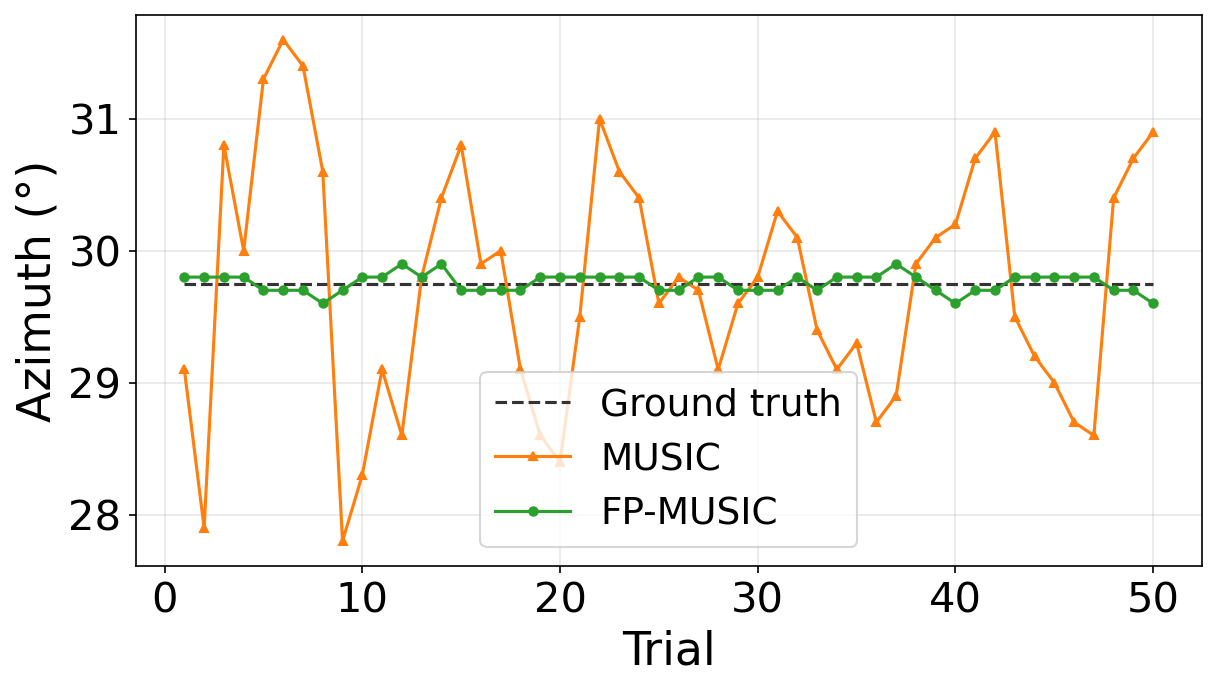}
    \caption{}
    \label{fig:mc_az}
\end{subfigure}
\hspace{1cm}
\begin{subfigure}{0.42\textwidth}
    \centering
    \includegraphics[width=\linewidth]{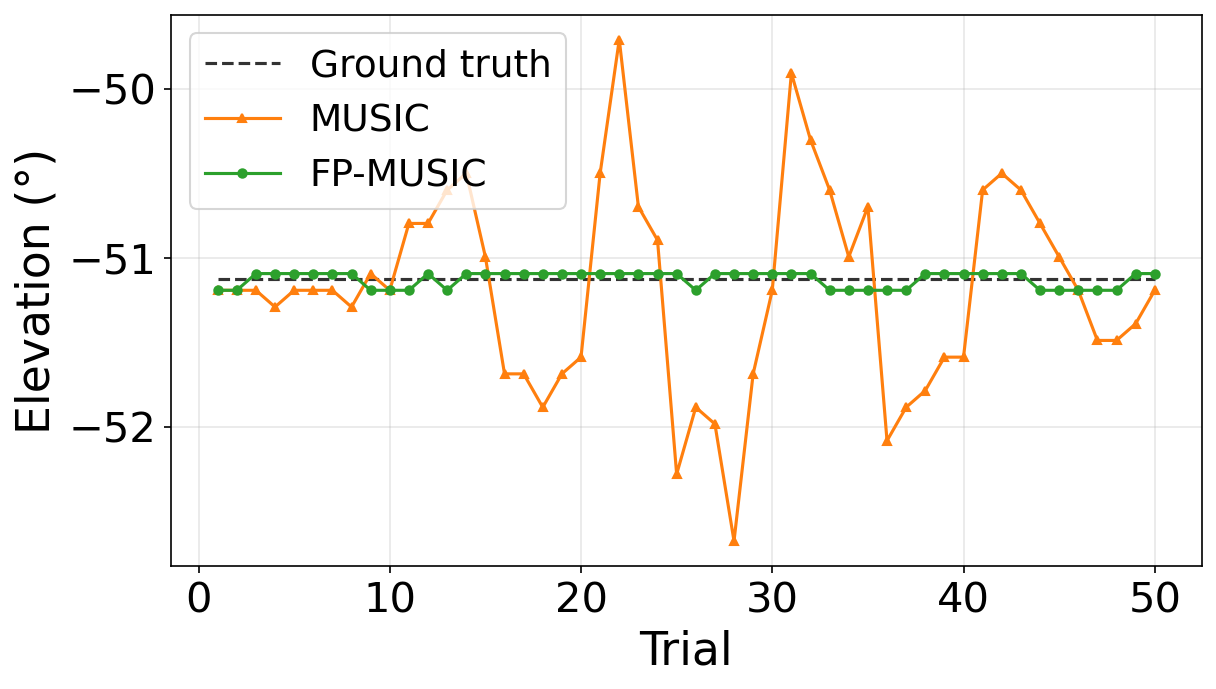}
    \caption{}
    \label{fig:mc_el}
\end{subfigure}
\caption{Performance of MUSIC and FP-MUSIC under 6-DOF wave motion at Sea State 1 using RSME over 50 trials.}
\label{fig:localization_main}
\end{figure*}

\subsection{Range Peak Extraction \& Beacon Tagging}

\noindent
Evaluating $\chi_{\rm env,i}(t)$ at resolved DOA $\hat\theta_i, \hat\phi_i$ yields range and 3-D Cartesian coordinates:
\begin{align}
  \label{eq:range}
  \hat{r}_i &= \argmax_{r}\, \chi_{{\rm env},i}^2(r/c) \\
  \label{eq:pos}
  \hat{\bm{p}}_{s,i} &= \bm{p}_c + \hat{r}_i\, \hat{\bm{u}}(\hat\theta_i, \hat\phi_i).
\end{align}
Here, $r/c$ maps the matched-filter delay to range using the sound known propagation speed, assuming a defined frame time reference to the beacon pulse, estimating an unknown transmitter-receiver clock offset is outside the scope of FP-MUSIC.


Received levels are evaluated directly from peak envelope amplitudes:
\begin{equation}
  \label{eq:power}
  \mathcal{L}_i = 20\log_{10} \chi_{{\rm env},i}(\hat{r}_i/c), \quad i = 1,2.
\end{equation}
The beacon with larger $\mathcal{L}_i$ is labeled \textit{front} and the other \textit{back}. This assigns identities independently per frame. The power offset is used only to resolve the permutation ambiguity after spatial separation, it does not create the MUSIC peaks. Front/back classification accuracy under our power-based tagging scheme is measured empirically in the trajectory-reconstruction experiment.

With the two beacons tagged, the vehicle heading (yaw) and pitch are, respectively, the azimuth and elevation of the front-to-back baseline vector $\hat{\bm{b}} = \hat{\bm{p}}_{s,\mathrm{f}} - \hat{\bm{p}}_{s,\mathrm{b}}$, where $\hat{\bm{p}}_{s,\mathrm{f}}$ and $\hat{\bm{p}}_{s,\mathrm{b}}$ are the tagged front and back positions from \eqref{eq:pos}, obtained from the same
$\operatorname{atan2}$ relations \eqref{eq:az}--\eqref{eq:el} that define $\theta$ and $\phi$, applied to $\hat{\bm{b}}$ rather than to $\bm{p}_s$. Unlike the array-to-source elevation, the baseline elevation is unrestricted in sign, since the vehicle pitches both up and down. Roll is unobservable with two beacons because they form a rotationally symmetric line.

\section{Experiment Results}
\label{sec:experiment}

\noindent
Due to the lack of unified simulators combining surface wave hydrodynamics with multi-channel acoustic signal generation, we evaluated our algorithm using Pyroomacoustics \cite{scheibler2018pyroomacoustics} configured for an underwater acoustic medium ($c = 1500$\,m/s, sampling rate $f_s = 96$\,kHz). Pyroomacoustics was strictly used as a geometric and kinematic snapshot generator to isolate direct-path phase corrections. {Unless otherwise stated, FP-MUSIC, MUSIC, MVDR, and DAS use the same array geometry, acoustic realizations, snapshot count, source locations, and noise realization for each Monte-Carlo trial, so the comparisons isolate the effect of spatial processing and wave-motion compensation.} Although any combination of array configuration could be used but in this work, the receiver array was a $4\times6$ URA ($M=24$) with half-wavelength $5$\,cm element spacing matched to $15$\,kHz. The beacons emitted wideband LFM chirps ($T_{\rm sig} = 0.1$\,s, $f_{\rm low} = 7.5$\,kHz, $f_{\rm high} = 15$\,kHz). Each $5$\,s capture frame was partitioned into $L = 16$ snapshots, which is long enough to observe a full chirp in each snapshot. {The two beacons share the same frequency band and are processed jointly by the array covariance. The $10$ dB transmit power offset is retained for post-localization source labeling. The simulation capture the time reference needed by the matched filter to map each detected pulse to one-way propagation delay.} 6-DOF buoy kinematics were generated using World Meteorological Organization (WMO) Sea State parameters \cite{wmo306, habouche2024}, with significant wave heights $H_s = \{0.05, 0.30, 0.88, 1.88, 3.25, 5.00, 7.50\}$\,m and peak periods $T_p = \{2.0, 3.5, 5.0, 6.5, 8.0, 9.5, 11.0\}$\,s for Sea States 1--7, respectively. The front and back beacons are independent, free-running transmitters that are not phase-locked to one another, so each chirp repetition is emitted with an
independent initial phase and the two sources decorrelate across the $L$ snapshots. The source covariance is therefore full rank, as MUSIC requires, without spatial smoothing, and no aperture is sacrificed to restore rank.

Roll, pitch, and yaw amplitudes were derived from the deep-water maximum wave
slope $\zeta_{\max}$:
\begin{align}
    \zeta_{\max} &= \pi H_s/\Lambda_w, \\
    \Lambda_w &= g_0 T_p^2/2\pi, \\
    \zeta_{\max} &= 2\pi^2 H_s/(g_0\,T_p^2),
\end{align}
where $\Lambda_w$ is the deep-water wavelength, $H_s$ the significant wave height, $T_p$ the peak period, and $g_0 = 9.81$\,m/s$^2$ the gravitational
acceleration \cite{deandalrymple1991}. The IMU data sampled at $400$\,Hz was interpolated to snapshot midpoints. Fixed-point loop parameters were set to $K_{\max} = 20$ and $\varepsilon = 10^{-3}$ degrees. All experiments undergoes  $50$ Monte-Carlos trials.

In practical hardware deployments, modern 9-axis IMUs provide onboard orientation estimation via an Attitude and Heading Reference System (AHRS), outputting time-synchronized unit quaternions $\mathbf{q}(t) = [q_w(t), q_x(t), q_y(t), q_z(t)]^T \in S^3$. At snapshot index $\ell$, the unit quaternion $\mathbf{q}_\ell$ interpolated to snapshot midpoint timestamp $t_\ell$ is mapped directly to the $3 \times 3$ rotation matrix $\bm{Q}_\ell \in SO(3)$ via standard Euler-Rodrigues kinematics. The evaluation of IMU is not the scope of this paper, therefore we assume the AHRS supplies orientation at its nominal factory accuracy.

\begin{figure*}[t]
\centering
\begin{subfigure}{0.32\textwidth}
    \centering
    \includegraphics[width=\linewidth]{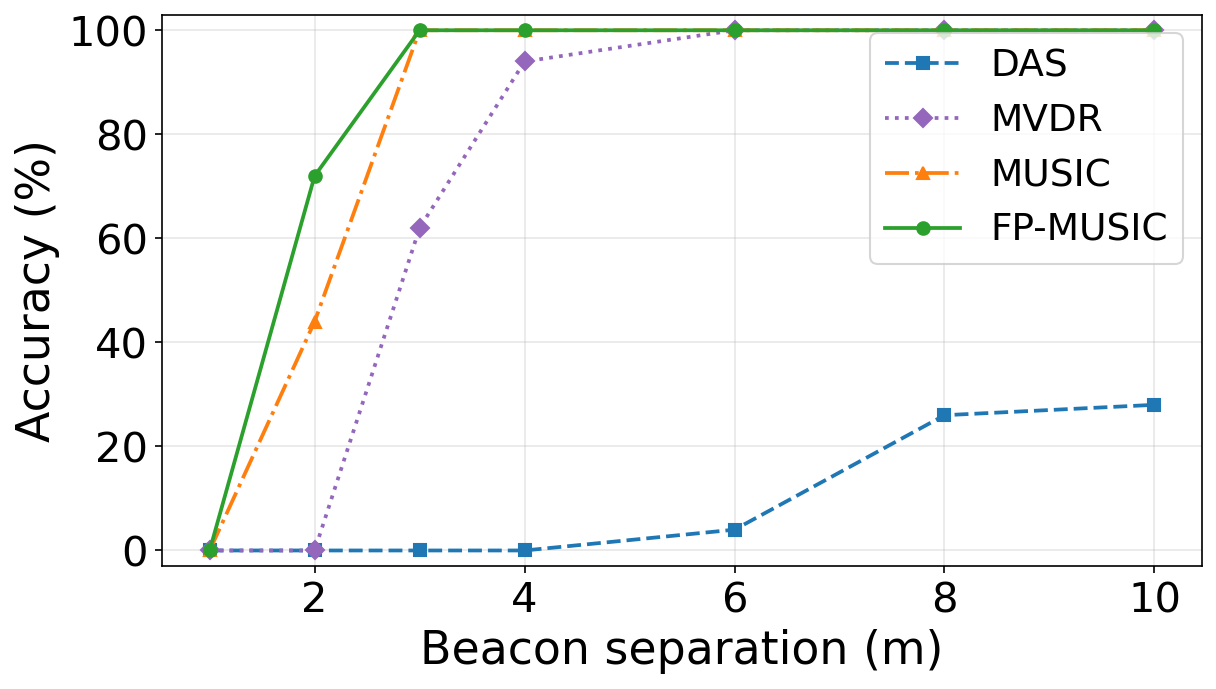}
    \caption{}
    \label{fig:fr_sep}
\end{subfigure}
\hspace{-0.1cm}
\begin{subfigure}{0.32\textwidth}
    \centering
    \includegraphics[width=\linewidth]{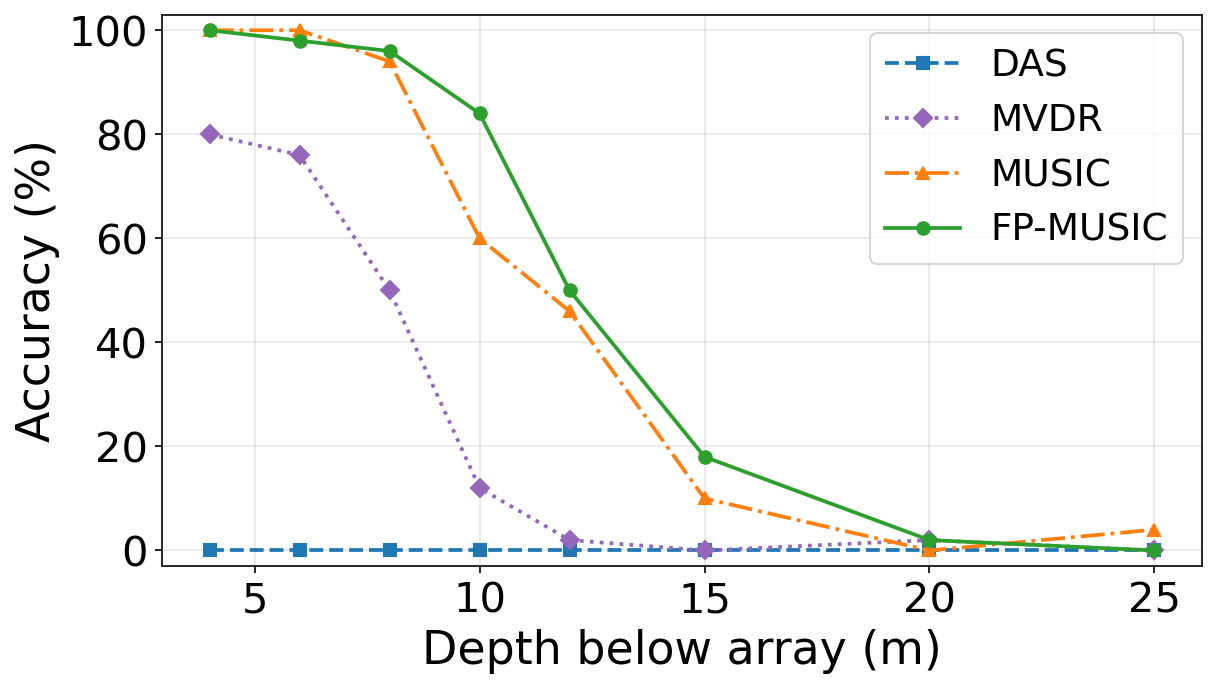}
    \caption{}
    \label{fig:fr_depth}
\end{subfigure}
\hspace{-0.1cm}
\begin{subfigure}{0.32\textwidth}
    \centering
    \includegraphics[width=\linewidth]{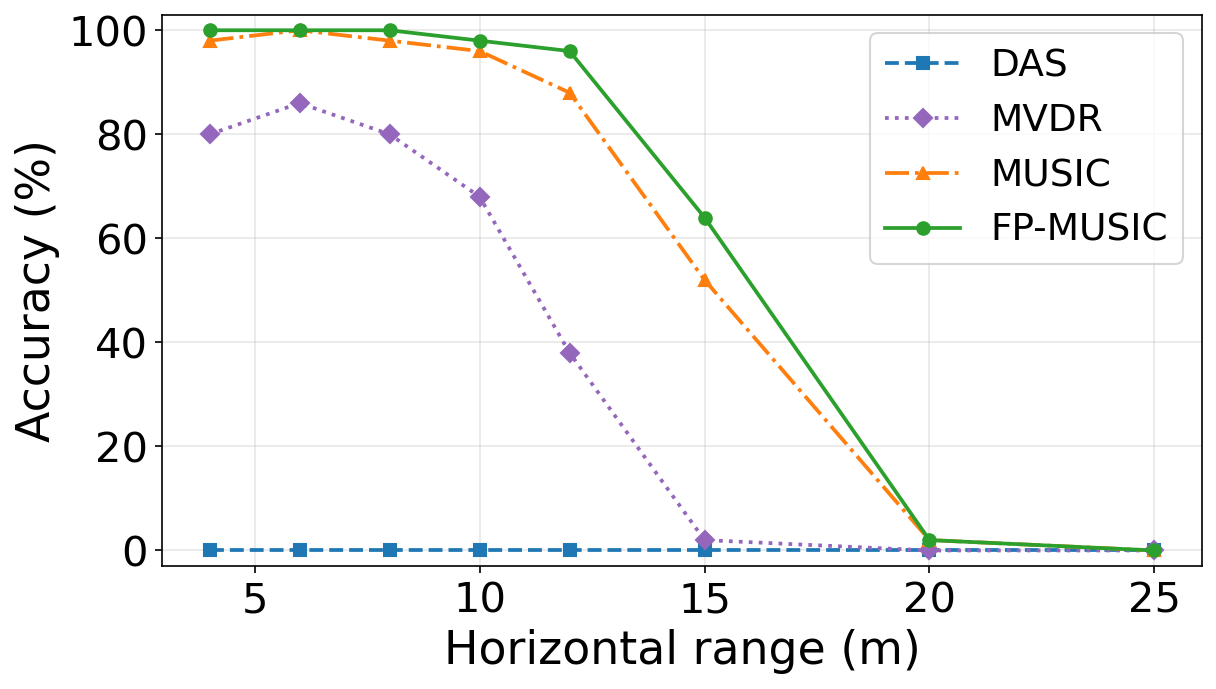}
    \caption{}
    \label{fig:fr_range}
\end{subfigure}
\caption{Evaluation of the method against (a) beacon separation (b) depth and (c) horizontal range.}
\label{fig:performance_em}
\end{figure*}
We structure the evaluation around three research questions. The first is whether IMU-guided snapshot de-warping reduces DOA estimation error under 6-DOF wave motion and up to what sea-state severity. The second question is how two-beacon angular resolvability degrades with AUV depth, horizontal range, beacon separation, and ambient noise against conventional baselines. Finally, whether the combined DOA estimation, wideband ranging, and power-asymmetry tagging pipeline reconstructs an end-to-end AUV's trajectory.


\subsection{Does FP-MUSIC compensate under all sea-state?}

\noindent
To address this, we evaluated FP-MUSIC against uncompensated MUSIC across Sea States 1–7 ($2$ beacons, depth $8$\,m). At Sea State 1 (Fig.~\ref{fig:localization_main}), FP-MUSIC substantially reduced DOA RMSE to $0.25^\circ$ azimuth and $0.02^\circ$ elevation (compared to $0.61^\circ$ and $0.57^\circ$ for standard MUSIC). As summarized in Table~\ref{tab:sea_state_eval}, FP-MUSIC maintains superior accuracy up to Sea State 4. At extreme severity (Sea States 5–7), elevation compensation continues to improve accuracy significantly, whereas azimuth shows minor crossover due to large initialization errors in severe motion. The bold values in Table~\ref{tab:sea_state_eval} indicates best RMSE values (low errors) when compensation is applied to the proposed method compared to standard MUSIC which does not have compensation mechanism.
\looseness -1

\begin{table}[b]
\centering
\caption{Azimuth and elevation DOA RMSE ($^\circ$) by sea state.}
\label{tab:sea_state_eval}
\begin{tabular}{lcccc}
\toprule
& \multicolumn{2}{c}{Azimuth RMSE ($^\circ$)} & \multicolumn{2}{c}{Elevation RMSE ($^\circ$)} \\
\cmidrule(lr){2-3}\cmidrule(lr){4-5}
Sea State & MUSIC & FP-MUSIC & MUSIC & FP-MUSIC \\
\midrule
1 & 0.61 & \textbf{0.25} & 0.57 & \textbf{0.02} \\
2 & 1.21 & \textbf{0.39} & 1.17 & \textbf{0.31} \\
3 & 1.81 & \textbf{1.12} & 1.96 & \textbf{0.87} \\
4 & 4.43 & \textbf{3.32} & 4.20 & \textbf{2.37} \\
5 & 6.95 & \textbf{6.88} & 6.48 & \textbf{4.43} \\
6 & \textbf{10.16} & 10.49 & 8.66 & \textbf{7.35} \\
7 & \textbf{14.75} & 16.09 & 12.34 & \textbf{10.66} \\
\bottomrule
\end{tabular}
\end{table}

\begin{figure*}[t]
\centering
\begin{subfigure}{0.42\textwidth}
    \centering
    \includegraphics[width=1\linewidth]{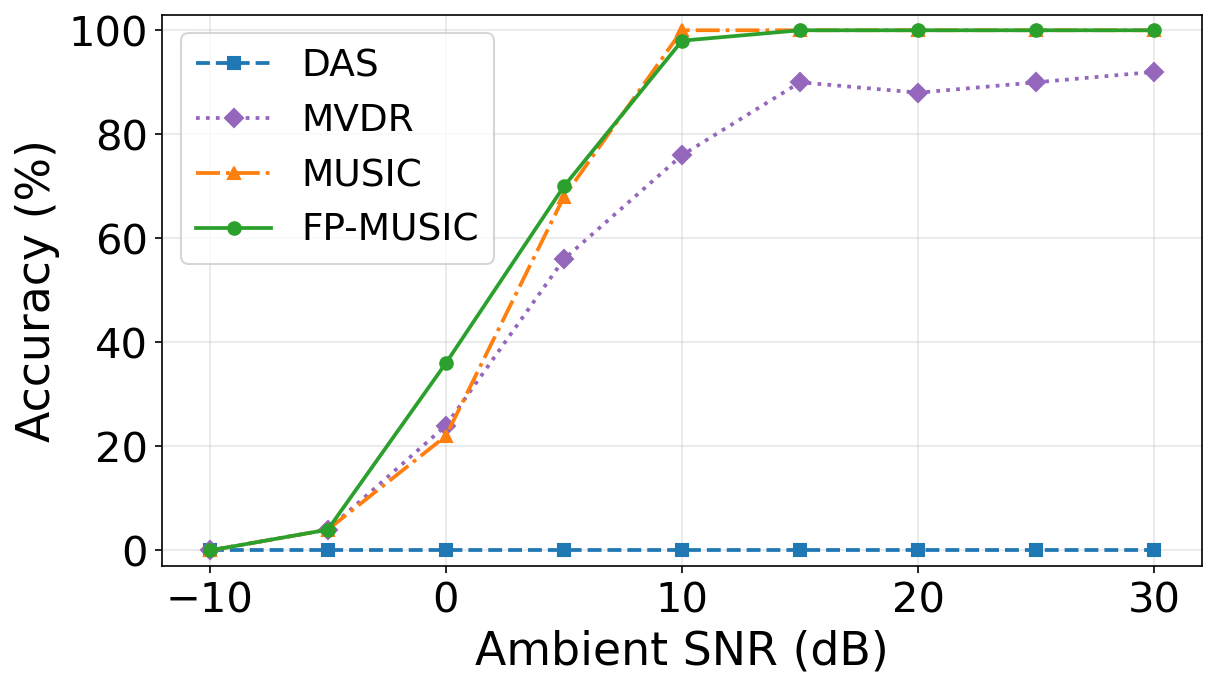}
    \caption{}
    \label{fig:accvsSnr}
\end{subfigure}
\hspace{0.04\textwidth}
\begin{subfigure}{0.40\textwidth}
    \centering
    \includegraphics[width=1.0\linewidth]{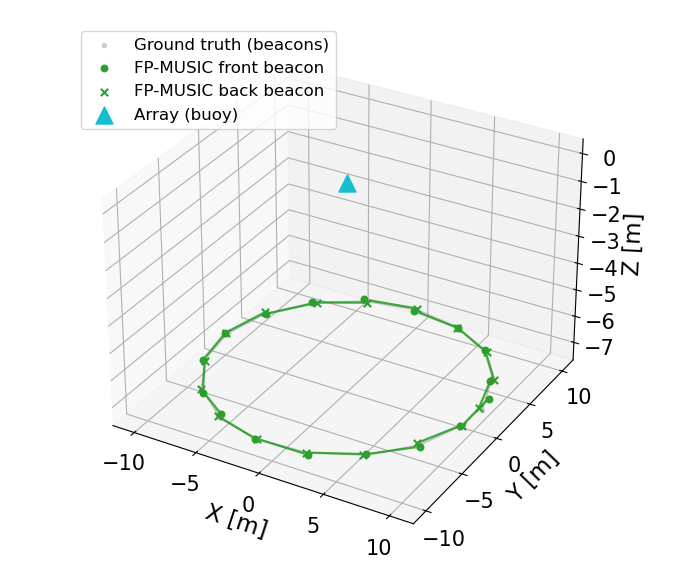}
    \caption{}
    \label{fig:rq3_trajectory}
\end{subfigure}
\caption{(a) Resolvability under oceanic ambient noise and (b) end-to-end trajectory using Algorithm \ref{alg:fp-music}.}
\label{fig:snr_rq3}
\end{figure*}

\subsection{Can the two beacons setup be resolved by FP-MUSIC under varying geometry?}

\noindent
We evaluated the proposed FP-MUSIC algorithm against DAS beamforming, MVDR, and standard uncompensated MUSIC. The aim is to quantify the limits of multi-beacon resolvability as the AUV's physical beacon separation varies, and as the AUV dives deeper or moves further away, which will shrink the relative angular separation at the receiver). In all three experiment, the ambient noise SNR was set to 10\ dB while sensor noise was set to 30\ dB. Resolution accuracy $\eta_{\text{res}}$ is defined as the percentage of successful trials out of total trials:
\begin{equation}
  \label{eq:accuracy}
  \eta_{\text{res}} = \frac{N_{\text{resolved}}}{N_{\text{trials}}} \times 100\%
\end{equation}
where $N_{\text{trials}}$ is the total number of Monte-Carlo trials and $N_{\text{resolved}}$ is the number of trials in which both beacons produce distinct pseudo-spectrum peaks within half their true angular separation. The wave motion was set to Sea State 3.

As shown in Fig.~\ref{fig:performance_em}(a) at the depth of 7\,m and horizontal range of 10\,m, classical DAS fails completely at close beacon separations due to the Rayleigh resolution limit, requiring at least 8\,m of separation to begin resolving the dual beacons. While MVDR narrows the spatial beamwidth, it fails below 4\,m. Subspace methods bypass this limit, however, at a compact separation of 2\,m, uncompensated MUSIC achieves only $\approx 45\%$ accuracy. Under continuous 6-DOF wave motion, intra-frame phase smearing artificially widens the MUSIC pseudo-spectrum peaks, causing tightly spaced beacons to bleed together into a single unresolved peak. By removing this rotational blur, FP-MUSIC restores peak sharpness, boosting 2\,m resolution accuracy to $\approx 75\%$. Increasing the beacons' separation would potentially increase the accuracy range of the AUV.

The Fig.~\ref{fig:performance_em}(b) and (c) illustrate performance as the AUV dives to 25\,m depth and moves to 25\,m horizontal range with at a beacon separation of 3\,m and depth as well as horizontal range of 7\,m and 10\,m, respectively. As range and depth increase, the effective angular separation between the front and back beacons relative to the surface buoy shrinks drastically. Consequently, DAS and MVDR fails almost immediately. While both MUSIC variants eventually succumb to the diminishing angular separation, FP-MUSIC consistently extends the operational envelope, maintaining a 10--15\% accuracy advantage over standard MUSIC in the critical transition regions (10--15\,m) by preserving subspace orthogonality under wave perturbation. Increasing the beacons' separation would increasing the depth as well as horizontal range of the AUV detection.

\subsection{Is Two-Beacon Resolvability Robust to Oceanic Ambient Noise?}

\noindent
To evaluate the robustness of the proposed framework against ambient sea noise, $n_m(t)$, we analyzed the multi-beacon resolution accuracy across a range of ambient SNR from $-10$ to $30$\,dB. For each operating point, the ambient-noise variance is set to $\sigma_n^2 = P_{\rm sig}\,10^{-\mathrm{SNR}_{\mathrm{dB}}/10}$, where $P_{\rm sig}$ is the noise-free received signal power.

As illustrated in the results Fig. \ref{fig:accvsSnr}, DAS beamforming fails to resolve the targets across all evaluated SNR levels, as the physical beacon separation remains fundamentally constrained by the array's Rayleigh limit. The adaptive MVDR beamformer shows high sensitivity to noise, only gradually improving and plateauing at approximately $90\%$ accuracy even in high-SNR. In contrast, the subspace-based methods demonstrate strong resilience to ambient noise. Both standard MUSIC and the proposed FP-MUSIC algorithm exhibit a sharp performance improvement between $-5$ dB and $10$ dB. This experiment demonstrates that iteratively removing wave-induced spatial blur helps preserve signal subspace orthogonality even when ambient noise power is high.
\looseness -1


\subsection{Can FP-MUSIC Reconstruct an End-to-End 3-D Trajectory and Pose?}

\noindent
To answer this, we run Algorithm \ref{alg:fp-music} as the AUV executed a circular underwater trajectory under 6-DOF wave motion, reconstructing both beacon positions and the vehicle orientation per frame from the full proposed pipeline and comparing against ground truth. Fig.~\ref{fig:rq3_trajectory} shows the result qualitatively: the FP-MUSIC pose estimates trace the true 3-D path, and the recovered front/back beacons stay aligned with the ground-truth orientation over the whole session. Using power-asymmetry tagging \eqref{eq:power}, the front and back beacons were classified correctly in $100\%$ of the $50$ resolved frames without relying on inter-frame history. Table~\ref{tab:rq3_pose} reports the per-frame RMSE against ground truth for the same session, recorded at Sea State~2 with the vehicle at $7$\,m depth and $10.2$\,m horizontal range. Compensation reduces the angular error by roughly a factor of around $2$, i.e. azimuth falls from $0.78^\circ$ to $0.25^\circ$ at the front beacon and from $1.00^\circ$ to $0.36^\circ$ at the back, and elevation from $1.02^\circ$ to $0.38^\circ$ and $1.04^\circ$ to $0.37^\circ$ respectively, halving the 3-D position error to $0.12$\,m and $0.14$\,m. Range is unchanged at $0.07$\,m and $0.10$\,m, as expected because the matched filter operates along the already-resolved DOA \eqref{eq:wb}, so residual bearing error well below a degree cannot shift the envelope peak by a range bin.

\begin{table}[t]
  \centering
  \small
  \setlength{\tabcolsep}{5pt}
  \caption{RMSE of AUV trajectory reconstruction under Sea State 2}
  \label{tab:rq3_pose}
  \begin{tabular}{l c c c}
    \toprule
    Quantity & MUSIC & FP-MUSIC \\
    \midrule
    Azimuth, $\hat\theta_{\rm f}$ ($^\circ$) & 0.78 & \textbf{0.25} \\
    Azimuth, $\hat\theta_{\rm b}$ ($^\circ$) & 1.00 & \textbf{0.36} \\
    \addlinespace[2pt]
    Elevation, $\hat\phi_{\rm f}$ ($^\circ$) & 1.02 & \textbf{0.38} \\
    Elevation, $\hat\phi_{\rm b}$ ($^\circ$) & 1.04 & \textbf{0.37} \\
    \addlinespace[2pt]
    Range, $\hat r_{\rm f}$ (m) & \textbf{0.07} & 0.07 \\
    Range, $\hat r_{\rm b}$ (m) & 0.10 & \textbf{0.10} \\
    \addlinespace[2pt]
    Position, $\hat{\bm p}_{s,\rm f}$ (m) & 0.27 & \textbf{0.12} \\
    Position, $\hat{\bm p}_{s,\rm b}$ (m) & 0.30 & \textbf{0.14} \\
    \midrule
    Resolved frames ($N$) & 50 & 50 \\
    \bottomrule
  \end{tabular}
\end{table}

\section{Conclusion and Future Work}%
\label{sec:conclusion}

\noindent
We presented FP-MUSIC, a {receiver-passive, one-way} 3-D localization and spatial mapping framework operating from a wave-driven surface buoy equipped with a hydrophone array and IMU. By proving translational invariance and employing an iterative fixed-point de-warping loop inspired by computational image unblurring, FP-MUSIC compensates for rotational perturbations and {recovers the high-resolution DOA capability of MUSIC under wave-induced covariance distortion without increasing the physical aperture or requiring frequency-separated beacons}. Combined with subspace wideband matched filtering and power-asymmetry tagging, the system achieves robust 3-D tracking and orientation estimation up to Sea State 4. Future work includes validating the system in field trials with a physical buoy array prototype, which will also quantify the effect of real AHRS orientation error, IMU--acoustic timing offset, and multipath propagation which are not captured by the present idealized direct-path channel. Moreover, multi-robot localization will also be investigated in the future iteration of this research.

\section{Acknowledgments}

\noindent
This work was supported by a research grant (VIL77281) from Villum Fonden. A language model was used for grammar check, literature review, and code fix. The proposed method is built by the authors independently.

\balance

\bibliographystyle{IEEEtran}
\bibliography{IEEEabrv,ref}

\end{document}